\crefname{section}{Sec.}{Secs.}
\Crefname{section}{Section}{Sections}
\Crefname{table}{Table}{Tables}
\crefname{table}{Tab.}{Tabs.}
\newtheorem{lemma}{Lemma}
\newtheorem{proposition}{Proposition}
\newtheorem{corollary}{Corollary}
\newtheorem{remark}{Remark}
\def\BibTeX{{\rm B\kern-.05em{\sc i\kern-.025em b}\kern-.08em
    T\kern-.1667em\lower.7ex\hbox{E}\kern-.125emX}}
\begin{document}

\title{A Gauss-Newton Approach for Min-Max Optimization in Generative Adversarial Networks}


\author{\IEEEauthorblockN{Neel Mishra}
\IEEEauthorblockA{ 
{IIIT Hyderabad}\\
India \\
neel.mishra@research.iiit.ac.in}
\and
\IEEEauthorblockN{Bamdev Mishra}
\IEEEauthorblockA{
{Microsoft} \\
{India}\\
bamdevm@microsoft.com}
\and
\IEEEauthorblockN{Pratik Jawanpuria}
\IEEEauthorblockA{
{Microsoft}\\
{India}\\
pratik.jawanpuria@microsoft.com}
\and
\IEEEauthorblockN{Pawan Kumar}
\IEEEauthorblockA{
{IIIT Hyderabad}\\
India \\
pawan.kumar@iiit.ac.in}
}

\maketitle

\begin{abstract}
A novel first-order method is proposed for training generative adversarial networks (GANs). It modifies the Gauss-Newton method to approximate the min-max Hessian and uses the Sherman-Morrison inversion formula to calculate the inverse. The method corresponds to a fixed-point method that ensures necessary contraction. To evaluate its effectiveness, numerical experiments are conducted on various datasets commonly used in image generation tasks, such as MNIST, Fashion MNIST, CIFAR10, FFHQ, and LSUN. Our method is capable of generating high-fidelity images with greater diversity across multiple datasets. It also achieves the highest inception score for CIFAR10 among all compared methods, including state-of-the-art second-order methods. Additionally, its execution time is comparable to that of first-order min-max methods.
\end{abstract}

\begin{IEEEkeywords}
Generative Adversarial Networks, Gauss-Newton, Optimization, Image Generation, Deep Learning, First Order Optimization, Sherman-Morrison inversion
\end{IEEEkeywords}

\section{Introduction}
Generative modeling has witnessed significant progress in recent years, with three primary categories of methods emerging: diffusion models, auto-regressive models, and generative adversarial networks (GANs). Lately, diffusion-based generative models have outperformed GANs in terms of image generation quality~\cite{dhariwal2021diffusion}. Notably, state-of-the-art diffusion models now trained on ``open world'' datasets such as LAION-5B~\cite{schuhmann2021laion} and models such as CLIP~\cite{radford2016unsupervised} can generate high-quality synthetic images from text prompts. Prominent recent examples include DALLE-2 by OpenAI~\cite{ramesh2022hierarchical}, the open-source stable diffusion model~\cite{rombach2022highresolution}, and Imagen~\cite{saharia2022photorealistic}. A recent auto-regressive model, Pathways Autoregressive text-to-image (Parti), treats text-to-image generation as a sequence-to-sequence modeling problem and employs a transformer-based tokenizer~\cite{yu2022scaling}.

Despite the growing popularity of diffusion and auto-regressive models, \cite{kang2023gigagan} demonstrated that a GAN-based architecture called GigaGAN can scale to large-scale text-to-image synthesis tasks while generating state-of-the-art image quality and being orders of magnitude faster than diffusion models. Furthermore, it supports latent space editing, including style mixing and latent interpolation. Consequently, GANs continue to be a widely-used generative modeling architecture due to their faster image generation and scalability. GigaGAN builds on the generative adversarial network architectures initially introduced in~\cite{goodfellow2014generative}, followed by variants such as Wasserstein GAN (WGAN)~\cite{arjovsky2017wasserstein} and WGAN-GP~\cite{gulrajani2017improved}. Numerous extensions and modifications of GAN architectures have been proposed, including conditional GAN~\cite{Mirza2014} and StyleGAN~\cite{karras2019style}. Contrary to diffusion models, which are known for stable training but slow generation times, training GAN-based architectures involves solving a delicate min-max problem. In this paper, we propose an effective solver for this minmax problem that leads to improved image generation quality and considerably faster training time.

\subsection{The min-max problem in GAN}
In GANs, we model the following min-max problem 
\begin{align}
    \min_x \max_y f(x,y),  \label{minmax}
\end{align}
where $f(x,y)$ is the utility of the min player. Since this is a zero-sum game, $-f(x,y)$ is the utility of the max player. Both players want to maximize their utility function.
For $f(x,y)$ being the utility of the min player we will sometimes denote $g(x,y)=-f(x,y)$ as the utility of the max player. Many works in the literature have tried modeling the utility function $f$. A solution to the min-max equation \eqref{minmax} is a local Nash equilibrium. Specifically, a point $(\bar{x},\bar{y})$ is defined to be a local Nash equilibrium point if there exists an epsilon ball around the point $U,$ such that for all $(x,y) \in U$, we have
\begin{align}
    \label{eq:nash_point}
    f(\bar{x}, y) \leq f(\bar{x}, \bar{y}) \le f(x, \bar{y}).
\end{align}
A simple interpretation of the equation \eqref{eq:nash_point} is that if $(\bar{x}, \bar{y})$ is a local Nash equilibrium point, then any perturbation in $x$ while keeping the $y$ fixed will only increase the function value. Similarly, any perturbation in $y$ while keeping the $x$ fixed can only decrease the function value. 

In the recent work \cite{jin2019}, the authors point out that the local Nash equilibrium may not be a good measure of local optimality. They reason that the Nash equilibrium relates to simultaneous games, whereas, GAN corresponds to a  sequential minimax game. Hence, they suggest using local minimax point as the local optimal point. In Proposition 6 in \cite{jin2019}, they show an example where local minimax point exists, but there are no local or global Nash equilibrium points. Nevertheless, numerous solvers were developed in past few years with a view of considering GAN training as simultaneous minmax problem \cite{mescheder2017,CGDpaper}.

\subsection{Related work on min-max solvers for GANs}
 The gradient descent ascent (GDA) method  in which both players update their utility greedily (by doing descent and ascent ``sequentially" using existing stochastic gradient methods such as SGD, ADAM\cite{kingma2017adam}, others\cite{duchi2011,loshchilov2019decoupled,neel2023}) may lead to a cyclic trajectory around the Nash equilibrium \cite{mazumdar2018} \cite{CGDpaper}.  To counter this cyclic behavior, the work \cite{daskalakis2018training} uses an optimistic variant of GDA in GANs and shows that this variant does not exhibit cyclic behavior and converges to the equilibrium. The authors in \cite{daskalakis2018limit} study the limit points of two first-order methods GDA and Optimistic Gradient Descent Ascent (OGDA); they show that the stable critical points of OGDA are a superset of the GDA-stable critical points, moreover, their update rules are local diffeomorphisms. Implicit regularization has been explored in machine learning in various flavors \cite{arora2019,azizan2019,gunasekar2017,jin2019,ma2017,neyshabur2017}. The authors of \cite{CGDpaper,ICGD} suggest that their solver, Competitive Gradient Descent (CGD) \cite{CGDpaper} incorporates implicit regularization and provide several empirical justification. The paper \cite{mescheder2017,sachin2023} uses the fixed point theory framework to analyze the convergence of common solvers used in GANs, and it shows that the convergence of these solvers depend on the maximum eigen-value of Jacobian of the gradient vector field and on the update rule considered. Moreover, \cite{mescheder2017} proposes a new solver, namely, Consensus Optimization (ConOPT) where they incorporate a regularizer which promotes consensus among both players. The authors of the solver Symplectic Gradient Adjustment (SGA) \cite{balduzzi2018} build up on ConOPT and suggest that anti-symmetric part of the Hessian is responsible for perceiving the small tendency of the gradient to rotate at each point, and to account for it SGA only incorporates the mixed derivative term. Some other variants include changes in normalization's of weights \cite{miyato2018spectral,han2023} or changes in architectures\cite{karras2019stylebased,kang2023scaling}. Table \ref{table:updates} consists of update equations for the first player for various algorithms. We note that except for GDA, all other methods shown use second-order terms.

\begin{table*}[t]
\caption{\label{table:updates}Various update rules for min-max optimization problems. Shown are the update rules for the first player of recent methods. Here, $\eta$ is the step size, and $\gamma$ is gradient correction hyper-parameter.}
\begin{center}
\begin{tabular}[t]{l l}
\toprule 
 Update rule &   Name \\ 
 \midrule 
    $\Delta x$ =  $- \nabla_x f$ &  GDA\\
    $\Delta x$ =  $- \nabla_x f - \gamma \nabla_{xy}^2 f \nabla_y f$ & SGA \cite{balduzzi2018} \\
    $\Delta x$ =  $- \nabla_x f - \gamma \nabla_{xy}^2 f \nabla_y f - \gamma \nabla_{xx}^2 f \nabla_x f$ & ConOpt \cite{mescheder2017}\\
    $\Delta x$ =  $- \nabla_x f - \eta \nabla_{xy}^2 f \nabla_y f + \eta \nabla_{xx}^2 f \nabla_x f$ & OGDA \cite{daskalakis2017} \\
     $\Delta x$ =  $ (Id + \eta^2 \nabla_{xy}^2 f \nabla_{yx}^2 f)^{-1}$   
    $\left( -\nabla_x f - \eta \nabla_{xy}^2 f \nabla_y f \right)$ & CGD \cite{CGDpaper} \\


    $\Delta x$ = $-\nabla_x f + \dfrac{1}{\lambda} \left(\nabla_x f - \dfrac{ (\nabla_x f)(\nabla_x f)^T(\nabla_x f) + (\nabla_x f) (\nabla_y f)^T(\nabla_y f)}{ \lambda +  (\nabla_x f)^T(\nabla_x f) + (\nabla_y f)^T(\nabla_y f)} \right)$ & Ours \\ 
    

    \bottomrule 
    
\end{tabular}
\end{center}
\end{table*}


\subsection{Contributions of the paper:}

\begin{itemize}
    \item We propose a new strategy for training GANs using an adapted Gauss-Newton method, which estimates the min-max Hessian as a rank-one update. We use the Sherman-Morrison inversion formula to effectively compute the inverse.
    \item We present a convergence analysis for our approach, identifying the fixed point iteration and demonstrating that the necessary condition for convergence is satisfied near the equilibrium point. 
    \item We assess the performance of our method on a variety of image generation datasets, such as MNIST, Fashion MNIST, CIFAR10, FFHQ, and LSUN. Our method achieves the highest inception score on CIFAR10 among all competing methods, including state-of-the-art second-order approaches, while maintaining execution times comparable to first-order methods like Adam.
\end{itemize}

\section{Proposed solver\label{sec:proposedSolver}}

For specific loss functions, such as the log-likelihood function, the Fisher information matrix is recognized as a positive semi-definite approximation to the expected Hessian of this loss function~\cite{ng_martens}. It has been demonstrated that the Fisher matrix is equivalent to the Generalized Gauss-Newton (GGN) matrix~\cite{martens2012training,schraudolph2002fast}. From this perspective, the Gauss-Newton (GN) method can be considered an alternative to the natural gradient method for GAN loss functions. Specifically, the GAN minmax objective is derived from the minimization of KL-divergence. Minimizing KL divergence is well-known to be equivalent to maximizing the log-likelihood of the parameterized distribution $p_\theta(x)$ with respect to the parameter $\theta$, i.e., $\inf_{\theta} D_{KL} (p \, || \, p_\theta) = \sup_{\theta} E_{x \sim p}[ \log p_\theta (x)].$ Consequently, using the Gauss-Newton approach is analogous to employing the natural gradient for maximizing the log-likelihood function. Similar to the Fisher matrix, the GN matrix is utilized as the curvature matrix of choice to obtain a Hessian-Free solver. The Gauss-Newton method~\cite{nocedal1999numerical} has been extensively explored in optimization literature. For a comprehensive analysis and discussion on the natural gradient, we refer the reader to~\cite{ng_martens,ollivier2017information,pascanu2013revisiting}. Given the effectiveness of the natural gradient method with the Fisher matrix and its equivalence to the Gauss-Newton method, we aim to develop efficient solvers for minmax problems in GANs. To the best of our knowledge, Gauss-Newton type methods have not been demonstrated to be effective for training GAN-like architectures. In this section, we adapt the Gauss-Newton method as a preconditioned solver for minmax problems and show that it fits within the fixed point framework~\cite{mescheder2017}. As a result, we obtain a reliable, efficient, and fast first-order solver. We proceed to describe our method in detail.
Consider the general form of the fixed point iterate, where $v(p)$ is the combined vector of gradients of both players at point $p = [x, y]^T$ (See Equation \eqref{eqn:vxy}). 
\begin{equation}
    F(p) = p + h A(p)v(p), \label{eqn:fixed}
\end{equation}
where $h$ is the step size. The matrix $A(p)$ corresponding to our method is 
\begin{align} 
A(p) = B(p)^{-1} - I,  \label{eqn:Ap}
\end{align}
where $B(p)$ denotes the Gauss-Newton preconditioning matrix defined as follows
\begin{align} 
B(p) =  \lambda I + v(p) v(p)^T. \label{eqn:Bp}
\end{align}
With Sherman-Morison formula to compute $B(p)^{-1}.$ We have 
\begin{equation}
A(p) = \frac{1}{\lambda} \left(I - \frac{\dfrac{v(p)v(p)^T}{\lambda}}{1 + \dfrac{v(p)^Tv(p)}{\lambda}}\right) - I.
\label{eq:A_mat}
\end{equation}
A regularization parameter $\lambda>0$ is added in \eqref{eqn:Bp} to keep $B(p)$ invertible. Our analysis in  next section suggests to keep $\lambda < 1$ for convergence guarantee. Hence, a recommended choice is $\lambda \in (0,1).$ To compute the inverse of Gauss-Newton matrix $B(p),$ we use the well-known Sherman–Morrison inversion formula to compute the inverse cheaply. Some of the previous second-order methods that used other approximation of Hessian require solving a large sparse linear system using a Krylov subspace solver \cite{CGDpaper,ICGD}; this makes these methods extremely slow. For larger GAN architectures with large model weights such as GigaGAN \cite{kang2023gigagan}, we cannot afford to use such costly second-order methods. 

\begin{remark}
    In standard Guass-Newton type updates, usually there is a ``$-$'' sign in front of $ B(p)$. In contrast, we have ``$+$'' sign. The implication is visible later where our proposed fixed point operator \eqref{eqn:fixed} is shown to be a contractive operator. 
\end{remark}


\begin{algorithm}[t]
\caption{\label{ALG1} Proposed solver for min-max.}
\begin{algorithmic}[1]
\REQUIRE $\min_{x} \max_{y}  f(x,y)$: zero-sum game objective function with parameters $x,y$
\REQUIRE $h$: Step size
\REQUIRE $\lambda$: Fisher preconditioning parameter
\REQUIRE Initial parameter vectors $p_0 = [x_0, y_0]$
\REQUIRE Initialize time step $t \leftarrow 0$
\REQUIRE Compute initial gradient $v_0$ at $p_0$
\REPEAT
\STATE $t\leftarrow t+1$
\STATE $u_t \leftarrow \frac{1}{\sqrt{\lambda}} * v_{t}$
\STATE $z \leftarrow \dfrac{1}{\lambda} * \left(v_t - \dfrac{ u_t ( u_t^T v_t)}{(1 + u_t^Tu_t)} \right)$
\STATE $\Delta \leftarrow - (v_{t} - z)$
\STATE Update \begin{align*}
\begin{bmatrix}
    x_{t} \\
    y_{t} 
\end{bmatrix} \leftarrow \begin{bmatrix}
    x_{t-1}\\
    y_{t-1}\end{bmatrix}  + h * \Delta  
\end{align*}
\UNTIL{$p_t = [x_t,y_t]^T$ converged}
\end{algorithmic}
\end{algorithm}

\subsection*{Algorithm\label{subsec:Algo}}
In Algorithm \ref{ALG1}, we show the steps corresponding to our method, and in Algorithm \ref{ALG2}, we show our method with momentum. In the algorithms, we implement the fixed point iteration \eqref{eqn:fixed} with the choice of matrix given in \eqref{eq:A_mat} above. We use $v_t$ to denote the gradient $v(p_t)$ computed at point $p_t = [x_t, y_t]^T$ at iteration $t.$ We use the notation $u_t$ in Line 3 of Algorithm \ref{ALG1} to denote the gradient $v _t$ scaled by $\sqrt{\lambda}.$ With these notations, our $A(p_t)$ matrix computed at $p_t$ becomes 
\begin{align*}
A(p_t) = \frac{1}{\lambda} \left(I - \frac{u_tu_t^T}{1 + u_t^Tu_t}\right) - I.
\end{align*}
Consequently, the matrix $A(p_t)v_t$ now becomes 
\begin{align*}
A(p_t)v_t = \frac{1}{\lambda} \left(v_t - \frac{u_tu_t^Tv_t}{1 + u_t^Tu_t}\right) - v_t.
\end{align*}
The computation of $A(p_t)v_t$ is achieved in Steps 4 and 5 of Algorithm \ref{ALG1}. Finally, 
the fixed point iteration: $$p_{t+1} = p_t + A(p_t)v_t$$ 
for $p_t = [x_t, y_t]^T$
is performed in Step 6. Similarly, a momentum based variant with a heuristic approach is developed in Algorithm \ref{ALG2}.

\begin{algorithm}[t]
\caption{\label{ALG2} Adaptive solver}
\begin{algorithmic}[1]
\REQUIRE $\min_{x} \max_{y}  f(x,y)$: zero-sum game objective function with parameters $x,y$
\REQUIRE $h$: Step size
\REQUIRE $\beta_2$: Exponential decay rates for the second moment estimates
\REQUIRE $\lambda$: Fisher preconditioning parameter
\REQUIRE Initial parameter vectors $p_0 = [x_0, y_0]$
\REQUIRE Compute initial gradient $v_0$ at $p_0$
\REQUIRE Compute $\theta_0 = v_0^2$
\REQUIRE Initialize time-step: $t \leftarrow 0$ 
\REPEAT
\STATE $t\leftarrow t+1$
\STATE $\theta_{t}\leftarrow \beta_{2}\cdot \theta_{t-1}+(1-\beta_{2}) \cdot v_{t-1}^{2}$
\STATE $g_{t} \leftarrow v_{t} /(\sqrt{\theta_{t}}+\epsilon)$
\STATE $u_t \leftarrow \sqrt{\dfrac{h}{\lambda}} * g_{t}$
\STATE $z \leftarrow \dfrac{1}{\lambda} * \left(v_t - \dfrac{ u_t  (u_t^T  v_t)}{(1 + u_t^Tu_t)} \right)$
\STATE $\Delta \leftarrow - (g_{t} - z)$
\STATE Update \begin{align*}
\begin{bmatrix}
    x_{t} \\
    y_{t} 
\end{bmatrix} \leftarrow \begin{bmatrix}
    x_{t-1}\\
    y_{t-1}\end{bmatrix}  + h * \Delta  
\end{align*}
\UNTIL{$p_t = [x_t,y_t]^T$ converged}
\end{algorithmic}
\end{algorithm}

\section{Fixed point iteration and convergence\label{sec:convergence}}
We wish to find a Nash equilibrium of a zero sum two player game associated with training GAN. As in \cite{mescheder2017}, we define the Nash equilibrium denoted as a point $\bar{p} = [\bar{x}, \bar{y}]^T$ if the condition mentioned in Equation \eqref{eq:nash_point} holds in some local neighborhood of $(\bar{x}, \bar{y}).$ For a differentiable two-player game, the associated vector field is given by 
\begin{align}
    v(x, y) = \begin{bmatrix}
        \nabla_x f(x, y) \\
        - \nabla_y f(x, y)
    \end{bmatrix},   \label{eqn:vxy}
\end{align} 
The derivative of the vector field is 
\begin{equation*}
    v'(x, y) = \begin{bmatrix}
        \nabla_{xx}^{2} f(x, y) & \nabla_{xy}^{2} f(x,y) \\
        -\nabla_{yx}^{2} f(x,y) & -\nabla_{yy}^{2} f(x,y) 
    \end{bmatrix}. 
\end{equation*}
The general update rule in the framework of fixed-point iteration is described in \cite[Equation~(16)]{mescheder2017}. Our method uses the fixed point iteration shown in \eqref{eqn:fixed}.
Recall that $p=(x,y).$ For the minmax problem \eqref{minmax}, with minmax gradient $v(p)$ defined in \eqref{eqn:vxy}, the fixed point iterative method in \eqref{eqn:fixed} well defined.
\begin{remark}
We remark here that in the following a given matrix (possibly unsymmetric) is called negative (semi) definite if the real part of the eigenvalues is negative (or nonpositive), see \cite{mescheder2017}.
\end{remark}

\begin{lemma}
\label{lem:ifandonlyif}
For zero-sum games, $v'(p)$ is negative semi-definite if and only if $\nabla_{xx}^{2}f(x,y)$ is negative semi-definite and $\nabla_{yy}^{2}f(x,y)$ is positive semi-definite.    
\end{lemma}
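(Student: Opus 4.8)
The plan is to reduce the spectral condition on the (generally nonsymmetric) matrix $v'(x,y)$ to an ordinary semidefiniteness condition on its symmetric part, which for this particular block structure decouples exactly into the two stated block conditions. The single load-bearing observation is that the mixed (cross-derivative) blocks cancel upon symmetrization.

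First I would record the structural facts coming from $f$ being twice continuously differentiable: by Clairaut's theorem the diagonal Hessian blocks $\nabla_{xx}^{2}f$ and $\nabla_{yy}^{2}f$ are symmetric, and the mixed blocks satisfy $\nabla_{yx}^{2}f=(\nabla_{xy}^{2}f)^{T}$. Writing $A=\nabla_{xx}^{2}f$, $D=\nabla_{yy}^{2}f$, $B=\nabla_{xy}^{2}f$, the Jacobian reads $v'=\left[\begin{smallmatrix} A & B \\ -B^{T} & -D\end{smallmatrix}\right]$. I would then compute its symmetric part $v_s':=\tfrac12\bigl(v'+(v')^{T}\bigr)$. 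Since under transposition the off-diagonal blocks are swapped and the one carrying a sign flips, the terms $B$ and $-B^{T}$ cancel, leaving the block-diagonal matrix $v_s'=\left[\begin{smallmatrix} A & 0 \\ 0 & -D\end{smallmatrix}\right]$. This cancellation is the heart of the lemma.

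Next I would invoke the standard identity that for a real matrix $M$ and any complex vector $z$ one has $\mathrm{Re}(z^{*}Mz)=z^{*}M_s z$, a real quantity because $M_s$ is symmetric. Applied to a unit eigenvector $z$ of $v'$ with eigenvalue $\lambda$, this gives $\mathrm{Re}(\lambda)=z^{*}v_s'z$; hence $v_s'\preceq 0$ forces every eigenvalue of $v'$ to have nonpositive real part. As $v_s'$ is block diagonal, $v_s'\preceq 0$ holds exactly when $A\preceq 0$ and $-D\preceq 0$, i.e. when $\nabla_{xx}^{2}f\preceq 0$ and $\nabla_{yy}^{2}f\succeq 0$. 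This already closes the ``if'' direction. For the converse I would test the quadratic form on the two coordinate subspaces: taking $z=(w,0)^{T}$ yields $\mathrm{Re}(z^{*}v'z)=w^{*}Aw$ and $z=(0,w)^{T}$ yields $-w^{*}Dw$, so nonpositivity of the form on these directions forces $A\preceq 0$ and $D\succeq 0$.

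The step I expect to be the main obstacle is precisely this converse, together with pinning down the intended meaning of ``negative semi-definite'' for a nonsymmetric matrix. The clean ``iff'' genuinely relies on reading the notion through the symmetric part, equivalently through the quadratic form $\mathrm{Re}(z^{*}v'z)\le 0$: symmetric-part definiteness always implies the spectral condition, but the reverse implication can fail in general, since the coupling block $B$ can render the full matrix spectrally stable even when a diagonal block such as $\nabla_{xx}^{2}f$ is indefinite. I would therefore state explicitly at the outset that negative semi-definiteness is taken in this quadratic-form sense (consistent with the convention of \cite{mescheder2017}), under which Steps relating to the symmetric part give both implications simultaneously, and I would carry out the test-vector argument above to make the ``only if'' direction rigorous.
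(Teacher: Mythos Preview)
Your argument is correct and complete under the quadratic-form reading of ``negative semi-definite'', and the key cancellation of the off-diagonal blocks under symmetrization is exactly the right structural observation. There is nothing substantive to compare it against: the paper does not prove Lemma~\ref{lem:ifandonlyif} itself but simply defers to \cite[Lemma~1 of Supplementary]{mescheder2017}, so your write-up is strictly more informative than what appears here.

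Your closing caveat about the definition is not just cautious bookkeeping; it is essential, and you are right to flag it. The surrounding text (the Remark just before Lemma~\ref{lem:ifandonlyif}) adopts the \emph{spectral} convention---all eigenvalues have nonpositive real part---and under that convention the ``only if'' direction is in fact false. A concrete witness in the scalar-block case is $A=1$, $D=3$, $B=2$, giving
\[
v'=\begin{pmatrix}1 & 2\\ -2 & -3\end{pmatrix},\qquad \det(\lambda I - v')=(\lambda+1)^2,
\]
so $v'$ has the single eigenvalue $-1$ (hence is spectrally negative definite) while $\nabla_{xx}^2 f=A=1$ is positive. Thus the equivalence genuinely requires the quadratic-form interpretation $\mathrm{Re}(z^{*}v'z)\le 0$, exactly as you state; your proof is the right one, and the paper's Remark is in tension with its own Lemma as written.
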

\begin{proof}
See \cite[Lemma 1 of Supplementary]{mescheder2017}.  $\blacksquare$
\end{proof}

\begin{corollary} \label{cor:negdef}
For zero-sum games, $v'(\bar{p})$ is negative semi-definite for any local Nash equilibrium $\bar{p}.$ Conversely, if $\bar{p}$ is a stationary point of $v(p)$ and $v'(\bar{p})$ is negative-definite, then $\bar{p}$ is a local Nash equilibrium.  
\end{corollary}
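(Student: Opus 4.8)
The plan is to prove the two implications separately. In each direction I would translate between the block structure of $v'(\bar{p})$ and the classical first- and second-order optimality conditions for the two coupled problems hidden inside \eqref{eq:nash_point}, using Lemma~\ref{lem:ifandonlyif} as the bridge that absorbs the off-diagonal coupling term $\nabla_{xy}^2 f$.

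For the forward implication, suppose $\bar{p} = [\bar{x}, \bar{y}]^T$ is a local Nash equilibrium, so \eqref{eq:nash_point} holds on some ball $U$. The first inequality $f(\bar{x}, y) \le f(\bar{x}, \bar{y})$ says that $\bar{y}$ is a local maximizer of the single-variable map $y \mapsto f(\bar{x}, y)$, and the second inequality $f(\bar{x}, \bar{y}) \le f(x, \bar{y})$ says that $\bar{x}$ is a local minimizer of $x \mapsto f(x, \bar{y})$. Applying the first-order necessary conditions to these two restricted problems gives $\nabla_y f(\bar{p}) = 0$ and $\nabla_x f(\bar{p}) = 0$, i.e. $v(\bar{p}) = 0$ by \eqref{eqn:vxy}. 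The second-order necessary conditions then control the two diagonal Hessian blocks: maximality in $y$ forces the definiteness of $\nabla_{yy}^2 f(\bar{p})$ and minimality in $x$ forces that of $\nabla_{xx}^2 f(\bar{p})$, in exactly the signs appearing in the hypothesis of Lemma~\ref{lem:ifandonlyif}. Feeding these two block conditions into Lemma~\ref{lem:ifandonlyif} yields that $v'(\bar{p})$ is negative semi-definite. The key point is that each Nash inequality freezes one variable, so only the corresponding diagonal block enters; the mixed term $\nabla_{xy}^2 f$ never appears in the Nash conditions and is handled entirely inside the Lemma.

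For the converse, suppose $\bar{p}$ is a stationary point of $v$, so that $\nabla_x f(\bar{p}) = 0$ and $\nabla_y f(\bar{p}) = 0$, and suppose in addition that $v'(\bar{p})$ is negative definite. Applying Lemma~\ref{lem:ifandonlyif} in its strict form converts negative definiteness of $v'(\bar{p})$ into strict definiteness of each diagonal block $\nabla_{xx}^2 f(\bar{p})$ and $\nabla_{yy}^2 f(\bar{p})$. Combining the vanishing gradients with these strict second-order conditions, the second-order \emph{sufficient} conditions for the two restricted problems guarantee that $\bar{x}$ is a strict local minimizer of $x \mapsto f(x, \bar{y})$ and $\bar{y}$ is a strict local maximizer of $y \mapsto f(\bar{x}, y)$, each on its own ball; intersecting these balls gives a common neighborhood on which \eqref{eq:nash_point} holds, so $\bar{p}$ is a local Nash equilibrium.

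The main obstacle I expect is the asymmetry between the two directions, which is exactly why the statement is not an ``if and only if.'' The forward direction only extracts second-order \emph{necessary} conditions, hence mere semi-definiteness, whereas the converse must run the second-order \emph{sufficient} argument and therefore genuinely needs the strict (definite) hypothesis to rule out degenerate directions along which a restricted map is flat to second order. The delicate bookkeeping is to ensure that the per-variable strict extrema, each obtained from a Taylor expansion with remainder on its own ball, survive on one common neighborhood $U$; this is routine once the neighborhoods are intersected. Throughout, Lemma~\ref{lem:ifandonlyif} does the real work of decoupling the problem into two scalar-block optimality arguments, so no direct eigenvalue computation on the full $v'(\bar{p})$ is needed.
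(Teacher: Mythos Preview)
Your argument is correct and is the standard proof via second-order optimality conditions combined with Lemma~\ref{lem:ifandonlyif}; the paper itself provides no proof, merely citing \cite[Corollary~2 in Appendix]{mescheder2017}, so your route is almost certainly the one appearing in that reference.

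The one place to tighten is the converse direction, where you invoke Lemma~\ref{lem:ifandonlyif} ``in its strict form.'' That strict analogue is not the lemma as stated and needs a sentence of justification: since the symmetric part of $v'(\bar p)$ is the block-diagonal matrix $\mathrm{diag}\bigl(\nabla_{xx}^2 f(\bar p),\,-\nabla_{yy}^2 f(\bar p)\bigr)$, negative definiteness of $v'(\bar p)$ in the quadratic-form sense is exactly strict definiteness of each diagonal block, which is what feeds the second-order sufficient conditions. Be aware that this equivalence relies on the quadratic-form reading of ``negative definite''; under the pure eigenvalue reading in the paper's Remark it can fail (e.g.\ $f(x,y)=\tfrac12 x^2+2xy+y^2$ yields $v'$ with eigenvalues of real part $-\tfrac12$ yet $\nabla_{xx}^2 f=1>0$), so state explicitly which convention you are using.
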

\begin{proof}
See \cite[Corollary 2 in Appendix]{mescheder2017}. $\blacksquare$
\end{proof}

\begin{proposition}
\label{prop:fixedpoint}
Let $F: \Omega \rightarrow \Omega$ be a continuously differentiable function on an open subset $\Omega$ of $\mathbb{R}^{n}$ and let $\bar{p} \in \Omega$ be so that
\begin{enumerate}
    \item $F(\bar{p})=\bar{p}$, and
    \item the absolute values of the eigenvalues of the Jacobian $F^{\prime}(\bar{p})$ are all smaller than 1 .
\end{enumerate}
Then, there is an open neighborhood $U$ of $\bar{p}$ so that for all $p_{0} \in U$, the iterates $F^{(k)}\left(p_{0}\right)$ converge to $\bar{p}$. The rate of convergence is at least linear. More precisely, the error $\left|F^{(k)}\left(p_{0}\right)-\bar{p}\right|$ is in $\mathcal{O}\left(\left|\lambda_{\max }\right|^{k}\right)$ for $k \rightarrow \infty,$ where $\lambda_{\max }$ is the eigenvalue of $F^{\prime}(\bar{p})$ with the largest absolute value.
\end{proposition}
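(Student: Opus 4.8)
The plan is to treat this as the classical local-convergence (Ostrowski) theorem for fixed-point iterations and to prove it by producing an adapted norm in which $F$ is a genuine contraction on a small ball around $\bar{p}$. Write $A := F'(\bar{p})$ and let $\rho := |\lambda_{\max}|$ denote its spectral radius, so that hypothesis~(2) reads $\rho < 1$. The immediate difficulty is that the Euclidean operator norm of $A$ may exceed $1$ even when $\rho < 1$, because $A$ need not be normal; hence the contraction principle cannot be applied directly in the standard norm. I expect this mismatch between spectral radius and operator norm to be the main obstacle.

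To overcome it I would invoke the standard result (due to Householder) that for every $\epsilon > 0$ there exists a vector norm $\|\cdot\|_*$ on $\mathbb{R}^{n}$ whose induced operator norm satisfies $\|A\|_* \le \rho + \epsilon$; this is obtained from the Schur decomposition of $A$ followed by a diagonal similarity that shrinks the strictly upper-triangular part below $\epsilon$. Fixing $\epsilon$ small enough that $\sigma := \rho + \epsilon < 1$, I would then use the $C^1$ hypothesis: since $p \mapsto F'(p)$ is continuous, $\|F'(p) - A\|_*$ is small near $\bar{p}$, so choosing any $\sigma' \in (\sigma, 1)$ there is a radius $r > 0$ for which $\|F'(p)\|_* \le \sigma'$ on the closed $\|\cdot\|_*$-ball $U$ of radius $r$ about $\bar{p}$.

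On the convex set $U$ the mean value inequality for vector-valued maps gives, for all $p,q \in U$,
\[
\|F(p) - F(q)\|_* \le \Big( \sup_{\xi \in U} \|F'(\xi)\|_* \Big)\, \|p - q\|_* \le \sigma'\,\|p - q\|_*,
\]
so $F$ is a $\sigma'$-contraction there. Using $F(\bar{p}) = \bar{p}$ from hypothesis~(1), for $p \in U$ we get $\|F(p) - \bar{p}\|_* \le \sigma'\|p - \bar{p}\|_* \le \sigma' r < r$, so $F$ maps $U$ into itself; iterating the contraction bound then yields
\[
\|F^{(k)}(p_0) - \bar{p}\|_* \le (\sigma')^{k}\,\|p_0 - \bar{p}\|_*
\]
for every $p_0 \in U$, which establishes convergence to $\bar{p}$ at a linear rate, and by equivalence of norms on $\mathbb{R}^{n}$ the same geometric decay holds in the Euclidean norm.

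Finally, for the sharp statement that the error is $\mathcal{O}(|\lambda_{\max}|^{k})$, I would observe that $\sigma' = \rho + \epsilon'$ can be taken with $\epsilon' > 0$ arbitrarily small, giving error $\mathcal{O}((\rho + \epsilon')^{k})$ for every $\epsilon'$; this identifies $\rho = |\lambda_{\max}|$ as the asymptotic contraction factor and so sharpens to the stated $\mathcal{O}(|\lambda_{\max}|^{k})$, the only caveat being possible polynomial prefactors $k^{m}$ should $\lambda_{\max}$ lie in a non-trivial Jordan block.
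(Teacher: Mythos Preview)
Your argument is correct and is precisely the classical Ostrowski-type proof: build an adapted norm in which the spectral radius becomes the operator norm up to $\epsilon$, use $C^1$-continuity of $F'$ to propagate the contraction to a small ball, and iterate the mean-value estimate. The paper, however, does not supply its own proof at all; it simply cites \cite[Proposition~4.4.1]{Bertsekas99}, whose proof follows exactly the line you take. So there is no substantive difference in approach---you have written out what the reference contains. Your caveat about possible polynomial prefactors $k^{m}$ when $\lambda_{\max}$ has a nontrivial Jordan block is a genuine refinement: the $\mathcal{O}(|\lambda_{\max}|^{k})$ claim as literally stated is slightly imprecise in that degenerate case, and your observation that one really gets $\mathcal{O}((|\lambda_{\max}|+\epsilon')^{k})$ for every $\epsilon'>0$ is the accurate formulation.
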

\begin{proof}
See \cite[Proposition~4.4.1]{Bertsekas99}.  $\blacksquare$
\end{proof}


The Jacobian of the fixed point iteration \ref{eqn:fixed} is given as follows
\begin{align}
    F'(p) = I + hA(p) v'(p) + hA'(p)v(p) \label{eqn:jacob}
\end{align}
At stationary point $p = \bar{p},$ $v(\bar{p})=0,$ hence from \eqref{eqn:Ap}, we have $A(\bar{p}) = (1/\lambda - 1)I,$ hence, at equilibrium, the Jacobian of the fixed point iteration \eqref{eqn:jacob} reduces to 
\begin{align}
    F'(\bar{p}) = I + h A(\bar{p}) v'(\bar{p}) 
                = I + \sigma v'(\bar{p}), \label{eqn:h}
\end{align}
where 
\begin{align}
\sigma = h(1/\lambda - 1).
\label{eqn:sigma_definition}
\end{align}

\begin{lemma} \label{}
    If $\bar{p}$ is a stationary point of $v(\bar{p})$ and $A(\bar{p})$ is defined as in equation \ref{eq:A_mat}, then $A(\bar{p})v'(\bar{p})$ is negative definite for some $\lambda < 1$.
\end{lemma}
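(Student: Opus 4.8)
The plan is to exploit the fact that at a stationary point the rank-one term in $B(\bar p)$ vanishes, so that $A(\bar p)$ collapses to a scalar multiple of the identity; the negative definiteness of the product $A(\bar p)v'(\bar p)$ then reduces to that of $v'(\bar p)$ alone.

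First I would evaluate $A(\bar p)$ directly. Since $\bar p$ is stationary we have $v(\bar p)=0$, so the outer-product term in \eqref{eqn:Bp} drops out and $B(\bar p)=\lambda I$. Substituting into \eqref{eqn:Ap} (equivalently, setting $v(\bar p)=0$ in the Sherman--Morrison form \eqref{eq:A_mat}) gives
\begin{equation*}
A(\bar p) = \frac{1}{\lambda}I - I = \Big(\frac{1}{\lambda}-1\Big) I .
\end{equation*}
The crucial sign observation is that for $0<\lambda<1$ the scalar $c:=1/\lambda-1$ is strictly positive.

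Next I would reduce the claim to a spectral property of $v'(\bar p)$. Because $A(\bar p)=cI$ with $c>0$, we have $A(\bar p)v'(\bar p)=c\,v'(\bar p)$, so the eigenvalues of $A(\bar p)v'(\bar p)$ are exactly $c$ times those of $v'(\bar p)$. Scaling by a positive real number leaves the sign of the real part of every eigenvalue unchanged, so under the convention of the preceding remark $A(\bar p)v'(\bar p)$ is negative definite precisely when $v'(\bar p)$ is. It therefore suffices to argue that $v'(\bar p)$ is negative definite, and in that case \emph{any} $\lambda\in(0,1)$ works, which is stronger than the ``some $\lambda<1$'' required.

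Finally, the definiteness of $v'(\bar p)$ I would obtain from the game structure via Corollary \ref{cor:negdef} and Lemma \ref{lem:ifandonlyif}, reading $\bar p$ as a (non-degenerate) local Nash equilibrium. The main obstacle is exactly this last step: the hypothesis only names $\bar p$ a stationary point, whereas the conclusion ``negative definite'' demands strictness, so the argument hinges on excluding the degenerate case in which $v'(\bar p)$ is merely negative semi-definite. I would close this gap by invoking the converse direction of Corollary \ref{cor:negdef}, assuming $\nabla_{xx}^{2}f(\bar p)$ is negative definite and $\nabla_{yy}^{2}f(\bar p)$ is positive definite --- the standard non-degeneracy condition for an isolated equilibrium, which is precisely what upgrades the semi-definite conclusion of Lemma \ref{lem:ifandonlyif} to the definiteness needed here.
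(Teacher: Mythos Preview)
Your proof is correct and mirrors the paper's own argument: set $v(\bar p)=0$ to collapse $A(\bar p)$ to $(1/\lambda-1)I$, observe this scalar is positive for $\lambda<1$, and invoke Corollary~\ref{cor:negdef} for the negative definiteness of $v'(\bar p)$. Your extra care about the semi-definite versus definite distinction is well placed --- the paper itself simply asserts ``negative definite'' via Corollary~\ref{cor:negdef} and does not address the non-degeneracy gap you flag.
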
 

\begin{proof}
\begin{align}  
     A(\bar{p})v'(\bar{p}) &= \left(\dfrac{1}{\lambda} \left(\dfrac{I - \dfrac{v(\bar{p})v(\bar{p})^T}{\lambda}}{1 + \dfrac{v(\bar{p})^Tv(\bar{p})}{\lambda}}\right) - I \right) v'(\bar{p}). \nonumber
\end{align}
At the fixed point, we have $v(\bar{p}) = 0,$ hence, from above
\begin{align*}
     A(\bar{p})v'(\bar{p}) &= \left(\frac{1}{\lambda} \left(I \right) - I \right) v'(\bar{p}).
\end{align*}
From Corollary \ref{cor:negdef}, $v'(\bar{p})$ is negative definite at the local Nash equilibrium $\bar{p}$, hence, for $\lambda < 1$  the proof is complete. $\blacksquare$
\end{proof}

At the stationary point, we do have $F(\bar{p}) = \bar{p},$ which satisfies the first item of Proposition \ref{prop:fixedpoint}. In the Corollary below, we show that for $\sigma$ in \eqref{eqn:h} small enough, we satisfy the Item 2 of Proposition \ref{prop:fixedpoint}. To this end, we use the following Lemma.

\begin{lemma} \label{lem:real_part}
    Let $U \in \mathbb{R}^{n \times n}$ only has all eigenvalues with negative real part, and let $h>0$ (in Equation \eqref{eqn:sigma_definition}), then the eigenvalues of the matrix $I + \sigma U$ lie in the unit ball if and only if 
    \begin{align}
        \sigma < \dfrac{1}{|R(\xi)|} \dfrac{2}{1 + \left( \dfrac{I(\xi)}{R(\xi)} \right)^2} \label{eqn:tbound}
    \end{align}
    for all eigenvalues $\xi$ of $U.$ Here, $I(\xi)$ and $R(\xi)$ denote the real and imaginary eigenvalues of the matrix $U.$
\end{lemma}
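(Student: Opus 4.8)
The plan is to reduce the matrix condition to a scalar condition on each eigenvalue of $U$ via the spectral mapping theorem, and then to solve the resulting quadratic inequality in closed form. First I would observe that $I + \sigma U$ is an affine (hence polynomial) function of $U$, so its eigenvalues are exactly $\{1 + \sigma \xi\}$ as $\xi$ ranges over the eigenvalues of $U$. Consequently, the requirement that every eigenvalue of $I + \sigma U$ lie strictly inside the unit ball is equivalent to $|1 + \sigma \xi| < 1$ holding simultaneously for every eigenvalue $\xi$ of $U$. It therefore suffices to characterize, for a single eigenvalue $\xi = R(\xi) + i\,I(\xi)$ with $R(\xi) < 0$ by hypothesis, when $|1 + \sigma \xi| < 1$.

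Next I would write out the modulus squared and impose the constraint:
\begin{align*}
|1 + \sigma \xi|^2 = \big(1 + \sigma R(\xi)\big)^2 + \big(\sigma I(\xi)\big)^2 < 1.
\end{align*}
Expanding cancels the constant term and leaves $2\sigma R(\xi) + \sigma^2\big(R(\xi)^2 + I(\xi)^2\big) < 0$. Since $\sigma = h(1/\lambda - 1) > 0$ (because $h > 0$ and $\lambda < 1$), I may divide through by $\sigma > 0$ and rearrange using $-R(\xi) = |R(\xi)| > 0$ to obtain
\begin{align*}
\sigma < \frac{-2R(\xi)}{R(\xi)^2 + I(\xi)^2} = \frac{2|R(\xi)|}{R(\xi)^2 + I(\xi)^2}.
\end{align*}

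Finally I would confirm that this closed form coincides with the bound stated in the lemma by factoring $R(\xi)^2$ out of the denominator:
\begin{align*}
\frac{2|R(\xi)|}{R(\xi)^2 + I(\xi)^2} = \frac{2|R(\xi)|}{R(\xi)^2\big(1 + (I(\xi)/R(\xi))^2\big)} = \frac{1}{|R(\xi)|}\,\frac{2}{1 + \left(\frac{I(\xi)}{R(\xi)}\right)^2},
\end{align*}
which is exactly \eqref{eqn:tbound}. Because every step above is an equivalence, both directions of the ``if and only if'' follow at once, and requiring the inequality for each $\xi$ yields the full statement.

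I do not expect a deep obstacle here; the work is essentially a quadratic inequality together with careful bookkeeping. The two points that genuinely require attention are the \emph{strictness} of the inequalities (the \emph{open} unit ball forces strict signs throughout, and in particular guarantees that dividing by $\sigma$ preserves direction only because $\sigma > 0$, which is precisely where the hypotheses $h > 0$ and $\lambda < 1$ enter), and the fact that $U$ is allowed to be non-symmetric. The spectral mapping argument is unaffected by the latter, since it depends only on the (possibly complex) spectrum counted with multiplicity and not on diagonalizability, so the characterization remains valid verbatim.
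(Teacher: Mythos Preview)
Your argument is correct and is exactly the standard route: spectral mapping reduces the matrix condition to the scalar inequality $|1+\sigma\xi|<1$, and expanding the squared modulus gives the stated bound as an equivalence. The paper does not supply its own proof here but simply cites \cite[Lemma~4 in Supplementary]{mescheder2017}; your computation is precisely what that reference contains, so there is nothing to contrast.
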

\begin{proof}
    The proof can be found in \cite[Lemma~4 in Supplementary]{mescheder2017}.  $\blacksquare$
\end{proof}

\begin{corollary}
    Let $\bar{p}$ be a stationary point. If $v'(\bar{p})$ has only eigenvalues with negative real part, then the iterative method with the fixed point iteration \eqref{eqn:fixed}, with \eqref{eqn:Ap} and \eqref{eqn:Bp} is locally convergent to $\bar{p}$ for $\sigma$ defined in \eqref{eqn:h}.
\end{corollary}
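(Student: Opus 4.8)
The plan is to verify the two hypotheses of Proposition \ref{prop:fixedpoint} for the map $F$ at $\bar{p}$ and then invoke its conclusion directly. First I would check the fixed-point condition (Item 1): since $\bar{p}$ is a stationary point of the vector field, $v(\bar{p}) = 0$, and substituting into \eqref{eqn:fixed} gives $F(\bar{p}) = \bar{p} + h A(\bar{p}) v(\bar{p}) = \bar{p}$. This holds regardless of the value of $A(\bar{p})$, so Item 1 is immediate.

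Next I would use the Jacobian at equilibrium already derived in \eqref{eqn:h}, namely $F'(\bar{p}) = I + \sigma v'(\bar{p})$ with $\sigma = h(1/\lambda - 1)$. The remaining task reduces to Item 2: showing that the eigenvalues of $I + \sigma v'(\bar{p})$ all lie strictly inside the unit disk. Here I would apply Lemma \ref{lem:real_part} with $U = v'(\bar{p})$. By hypothesis $v'(\bar{p})$ has only eigenvalues with strictly negative real part, so the Lemma's assumption is satisfied, and its equivalence states that $I + \sigma v'(\bar{p})$ has all eigenvalues in the unit ball precisely when $\sigma$ satisfies the bound \eqref{eqn:tbound} for every eigenvalue $\xi$ of $v'(\bar{p})$.

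It then remains to argue that an admissible $\sigma$ exists. Since every eigenvalue $\xi$ has $R(\xi) < 0$, we have $|R(\xi)| > 0$, so the right-hand side of \eqref{eqn:tbound} is strictly positive for each $\xi$; as $v'(\bar{p})$ is finite-dimensional, the minimum of these bounds over its finitely many eigenvalues is a strictly positive number, call it $\sigma^{*}$. Choosing any $\sigma \in (0, \sigma^{*})$ satisfies \eqref{eqn:tbound} simultaneously for all eigenvalues. Because $\sigma = h(1/\lambda - 1)$ with $h > 0$ and the recommended $\lambda \in (0,1)$ forcing $1/\lambda - 1 > 0$, such a small positive $\sigma$ is realizable by shrinking $h$ (or taking $\lambda$ closer to $1$). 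With both items of Proposition \ref{prop:fixedpoint} verified, the proposition yields an open neighborhood $U$ of $\bar{p}$ on which the iterates $F^{(k)}(p_0)$ converge to $\bar{p}$ at least linearly.

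The main obstacle I anticipate is not any single computation but the existence argument for $\sigma$: one must be sure that the bound \eqref{eqn:tbound} does not degenerate to zero. This is exactly where the hypothesis $R(\xi) < 0$ is essential, since a vanishing real part would send the right-hand side of \eqref{eqn:tbound} to $0$ and collapse the admissible interval. By Corollary \ref{cor:negdef}, negative-definiteness of $v'(\bar{p})$ is guaranteed at a local Nash equilibrium, which is what makes $\sigma^{*} > 0$ and hence the claimed local convergence genuinely attainable for the parameter regime $\lambda \in (0,1)$.
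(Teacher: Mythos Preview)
Your proposal is correct and follows essentially the same approach as the paper: verify $F(\bar{p})=\bar{p}$ from $v(\bar{p})=0$, use the Jacobian formula $F'(\bar{p})=I+\sigma v'(\bar{p})$ from \eqref{eqn:h}, apply Lemma~\ref{lem:real_part} with $U=v'(\bar{p})$, argue that $h$ and $\lambda$ can be tuned so that $\sigma$ meets the bound \eqref{eqn:tbound}, and conclude via Proposition~\ref{prop:fixedpoint}. Your write-up is in fact more careful than the paper's, since you explicitly take the minimum of the finitely many right-hand sides to obtain a positive $\sigma^{*}$ and note that the strict negativity of $R(\xi)$ is what keeps this minimum away from zero; you also correctly observe that the negative-real-part hypothesis is already assumed in the statement, so the paper's appeal to Corollary~\ref{cor:negdef} within the proof is redundant.
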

\begin{proof}
    As mentioned above, $F(\bar{p}) = \bar{p}$ at stationary point $\bar{p}$ thereby satisfying item 1 of Proposition \ref{prop:fixedpoint}. We use Lemma \ref{lem:real_part} with $\sigma$ same as in \eqref{eqn:h} and $U = v'(\bar{p}).$ From corollary \ref{cor:negdef}, we have that $U$ has all eigenvalues with negative real part. Moreover, we can change $h$, and $\lambda$ such that $\sigma$ satisfies the upper bound \eqref{eqn:tbound}, and we will have the eigenvalues of $F'(\bar{p}) = I + \sigma U = I + \sigma v'(\bar{p})$ lie in a unit ball, hence, we also satisfy item 2 of Proposition \ref{prop:fixedpoint}. Thereby, we have local convergence due to Proposition \ref{prop:fixedpoint}.  $\blacksquare$
\end{proof}


\begin{table}[ht]
\centering
	\caption{Generator architecture for MNIST and FashionMNIST experiments. Here Conv1, Conv2, and Conv3 are convolution layers with batch normalization and ReLU activation, Conv4 is a convolution layer with Tanh activation.}
	\begin{tabular}{l|c|c|c|c}
		\toprule
		Module            & Kernel & Stride & Pad & Shape \\
		\midrule
		Input  & N/A    & N/A    & N/A &  $z \in \mathbb{R}^{100} \sim \mathcal{N}(0, I) $ \\
        Conv1 & $4\times 4$  & 1  & 0    & $100 \rightarrow 1024 $  \\
		Conv2 & $4\times 4$  & 2  & 1    & $1024 \rightarrow 512 $ \\
		Conv3 & $4\times 4$  & 2  & 1    & $512 \rightarrow 256 $  \\
		Conv4 & $4\times 4$  & 2  & 1    & $256 \rightarrow 3 $  \\
		\bottomrule
	\end{tabular}%
	\label{table:mnist_generator}
\end{table}
\begin{table}[ht]
\centering 
\caption{Discriminator architecture for MNIST and FashionMNIST experiments. Here Conv1 is a convolution layer with LeakyReLU activation, Conv2, and Conv3 are convolution layers with batch normalization and ReLU activation, Conv4 is a convolution layer with sigmoid activation.}
	\begin{tabular}{l|c|c|c|c}
		\toprule
		Module            & Kernel & Stride & Pad & Shape \\
		\midrule
		Input  & N/A        & N/A    & N/A &  $z \in \mathbb{R}^{100} \sim \mathcal{N}(0, I) $ \\
		Input  & N/A        & N/A    & N/A &  $x \in \mathbb{R}^{3{\times}32{\times}32}$ \\
        Conv1 & $4\times 4$  & 2  & 1    & $3 \rightarrow 256 $  \\
		Conv2 & $4\times 4$  & 2  & 1    & $256 \rightarrow 512 $ \\
		Conv3 & $4\times 4$  & 2  & 1    & $512 \rightarrow  1024$  \\
		Conv4 & $4\times 4$  & 1  & 0    & $1024 \rightarrow 1 $  \\
        \bottomrule
  \hline
	\end{tabular}%
	\label{table:mnist_discriminator}
\end{table}

\section{Numerical results\label{sec:numexp}}
\subsection{Experimental setup}
\begin{figure*}[t]
    \centering
    \begin{subfigure}[t]{0.48\textwidth}
        \centering \includegraphics[height=1.5 in]{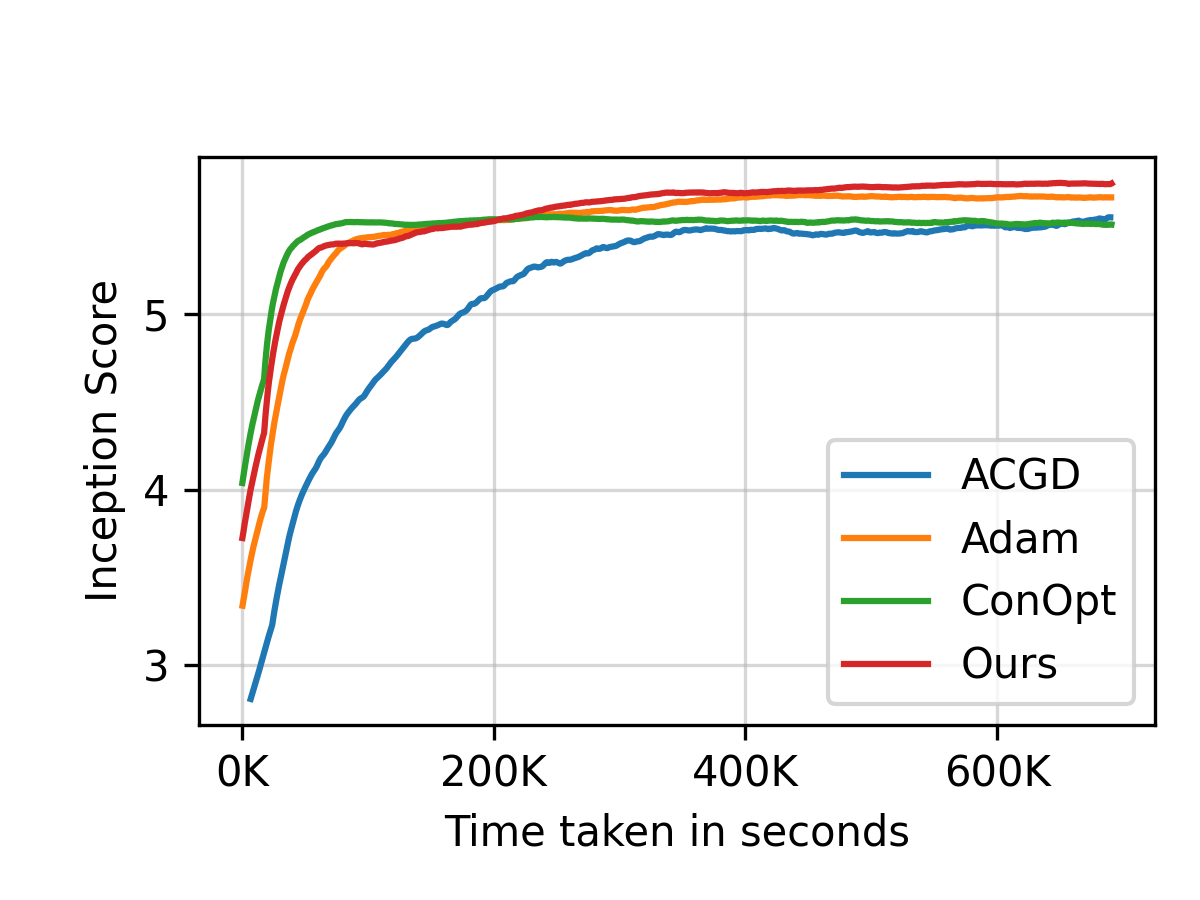}
        \caption{\label{fig:cifarInception}CIFAR10: Inception score over an extended training run.}
    \end{subfigure}
    \begin{subfigure}[t]{0.48\textwidth}
        \centering
        \includegraphics[height=1.5in]{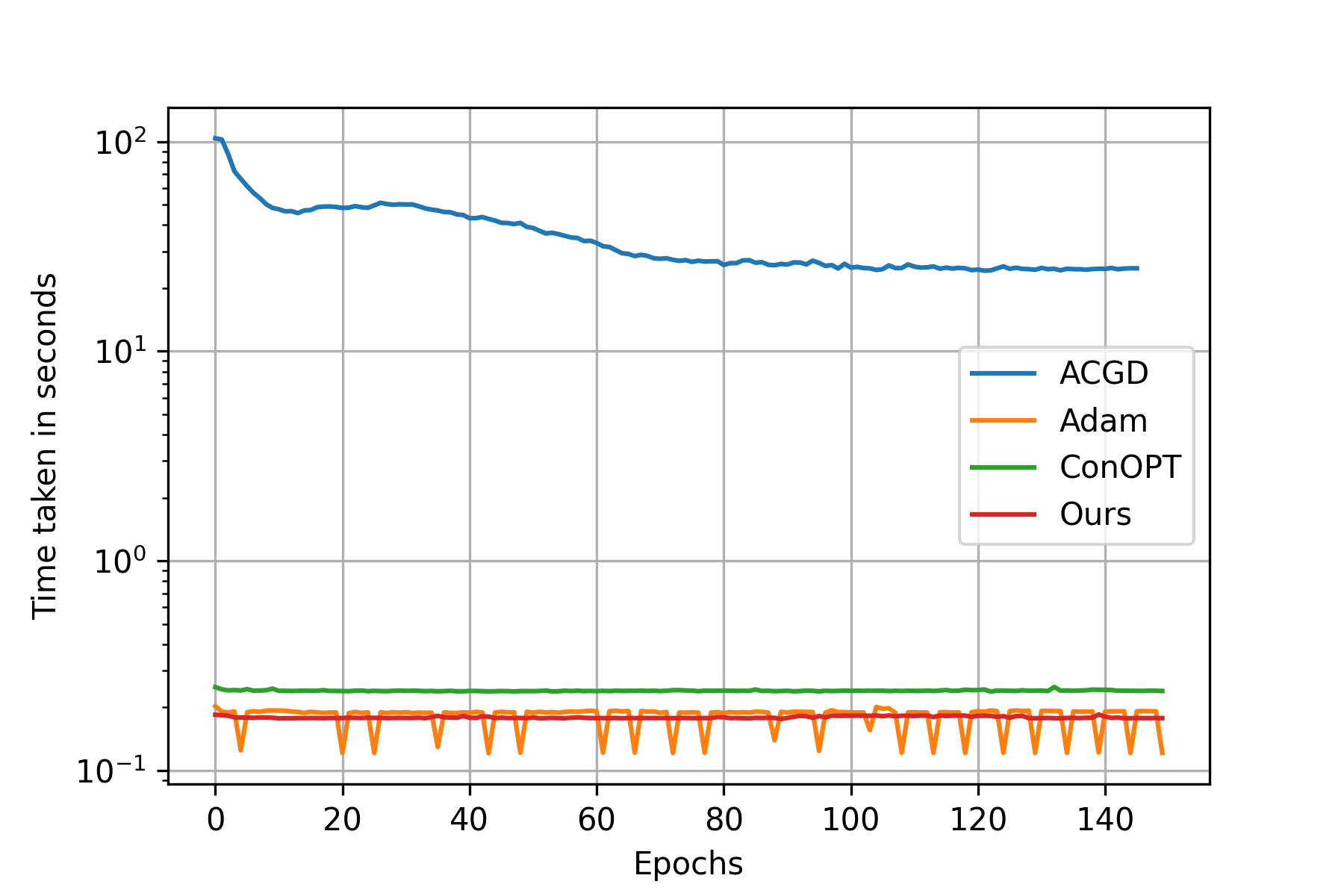}
        \caption{\label{fig:cifarTime}CIFAR10: Time comparison.}
    \end{subfigure}
    \begin{subfigure}[t]{\textwidth}
        \centering
        \includegraphics[height=1.5in]{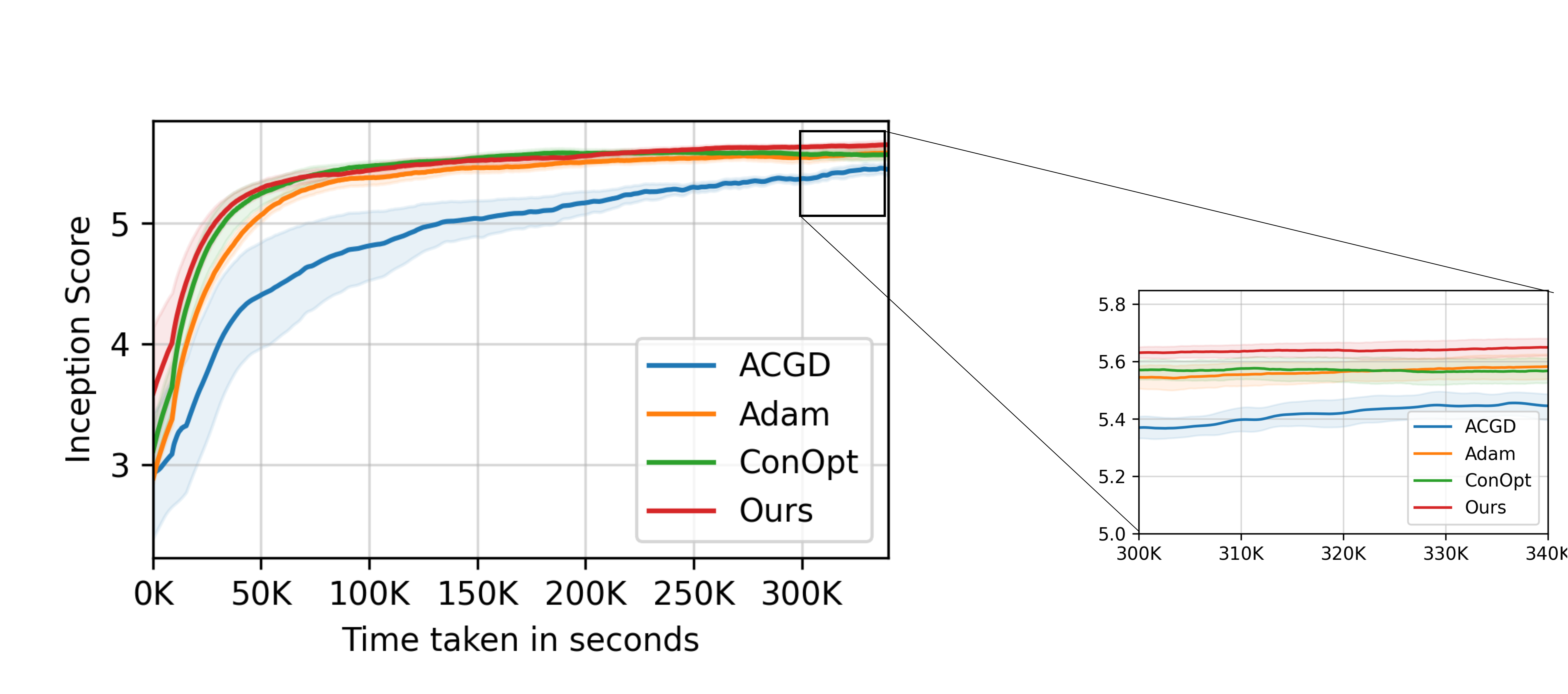}
        \caption{\label{fig:cifarVariance}CIFAR10: Variance in inception scores for multiple runs.}
    \end{subfigure}
    \caption{\label{fig:CIFAR_results}Results for CIFAR10. }
\end{figure*}
In the following, we refer to Algorithm \ref{ALG2} as ``ours''. We compare our proposed method with ACGD \cite{CGDpaper} (a practical implementation of CGD), ConOpt \cite{mescheder2017}, and Adam (GDA with gradient penalty and Adam optimizer). ACGD and ConOpt are second-order optimizers, whereas Adam is a first-order optimizer. We do not compare with other methods including SGA \cite{balduzzi2018} and OGDA \cite{daskalakis2017} as ACGD \cite{CGDpaper} has 
 already been shown to be better than those. 

All experiments were carried out on an \texttt{Intel Xeon E5-2640} v4 processor with 40 virtual cores, 128GB of RAM, and an \texttt{NVIDIA GeForce GTX 1080 Ti} GPU equipped with 14336 CUDA cores and 44 GB of \texttt{GDDR5X VRAM}. The implementations were done using \texttt{Python 3.9.1} and \texttt{PyTorch (torch-1.7.1)}. Model evaluations were performed using the Inception score, computed with the \texttt{inception\_v3} module from \texttt{torchvision-0.8.2}. The WGAN loss with gradient penalty was employed. We tuned the methods on a variety of hyperparameters to ensure optimal results, following the recommendations provided in the official GitHub repositories or papers.
For ACGD, we used the code provided in the official GitHub repository as a solver (\url{https://github.com/devzhk/cgds-package}), and followed their default recommendation for the $maxiter$ parameter. The step size $h$ was tested across multiple values $({10}^{-1}, {10}^{-2}, {10}^{-3}, {10}^{-4}, {10}^{-5}, 8 \times {10}^{-5}, 2 \times {10}^{-4})$, and the gradient penalty constant $\lambda_{\text{gp}}$ was varied as $(10, 1, {10}^{-1}, 0)$. The best result for ACGD in our experimental setup was obtained with $h = {10}^{-4}$ and $\lambda_{\text{gp}} = 1$. In the case of ConOpt, we tried the step size $h$ across ${10}^{-1}, {10}^{-2}, {10}^{-3}, {10}^{-4}, {10}^{-5}, 2\times {10}^{-5}, 8\times {10}^{-6}$, varied $\lambda_{\text{gp}}$ as $(10, 1, {10}^{-1}, 0)$, and adjusted the ConOpt parameter $\gamma$ across $10, 1, {10}^{-1}$. ConOpt performed best in our setup with $h = {10}^{-5}$, $\lambda_{\text{gp}} = 1$, and $\gamma = e^{-1}$. For Adam, we tested the same step sizes as with ConOpt and used the default parameter values recommended by PyTorch. Adam's best performance was achieved with a step size of $lr = {10}^{-5}$. Our proposed method performed best with regularization parameter $\lambda = {10}^{-1}$, step size $h = {10}^{-5}$, and gradient penalty constant for WGAN $\lambda_{\text{gp}} = 10$.

\begin{table}[t]
\centering
	\caption{Generator architecture for CIFAR10 experiments. Here Conv1, Conv2, and Conv3 is a convolution layer with batch normalization and ReLU activation, Conv4 is a convolution layer with Tanh activation.}
	\begin{tabular}{l|c|c|c|c}
		\toprule
		Module            & Kernel & Stride & Pad & Shape \\
		\midrule
		
		Input  & N/A    & N/A    & N/A &  $z \in \mathbb{R}^{100} \sim \mathcal{N}(0, I) $ \\
        Conv1 & $4\times 4$  & 1  & 0    & $100 \rightarrow 1024 $  \\
		Conv2 & $4\times 4$  & 2  & 1    & $1024 \rightarrow 512 $ \\
		Conv3 & $4\times 4$  & 2  & 1    & $512 \rightarrow 256 $  \\
		Conv4 & $4\times 4$  & 2  & 1    & $256 \rightarrow 3 $  \\
		\bottomrule 
	\end{tabular}%
	\label{table:cifar_generator}
\end{table}
\begin{table}[t]
\centering 
	\caption{Discriminator architecture for CIFAR10 experiments. Here Conv1 is a convolution layer with LeakyReLU activation, Conv2 and Conv3 is a convolution layer with batch normalization, and LeakyReLU activation, Conv4 is a convolution layer with sigmoid activation.}
	\begin{tabular}{l|c|c|c|c}
		\toprule
		Module            & Kernel & Stride & Pad & Shape \\
		\midrule
		
		Input  & N/A        & N/A    & N/A &  $z \in \mathbb{R}^{100} \sim \mathcal{N}(0, I) $ \\
		Input  & N/A        & N/A    & N/A &  $x \in \mathbb{R}^{3{\times}32{\times}32}$ \\
        Conv1     & $4\times 4$  & 2  & 1    & $3 \rightarrow 256 $  \\
		Conv2 & $4\times 4$  & 2  & 1    & $256 \rightarrow 512 $ \\
		Conv3 & $4\times 4$  & 2  & 1    & $512 \rightarrow  1024$  \\
		Conv4       & $4\times 4$  & 1  & 0    & $1024 \rightarrow 1 $  \\
		\bottomrule
	\end{tabular}%
	\label{table:cifar_discriminator}
\end{table}


\subsection{Image generation on grey scale images}
\begin{figure*}[H]
    \centering
    \begin{subfigure}[t]{0.48\textwidth}
        \centering \includegraphics[height=1.9 in]{CIFAR_results/cifar_inception2.png}
        \caption{\label{fig:cifarInception}CIFAR10: Inception score.}
    \end{subfigure}
    \begin{subfigure}[t]{0.48\textwidth}
        \centering
        \includegraphics[height=1.8in]{CIFAR_results/cifar_time.png}
        \caption{\label{fig:cifarTime}CIFAR10: Time comparison.}
    \end{subfigure}
    \begin{subfigure}[t]{\textwidth}
        \centering
        \includegraphics[height=1.9in]{CIFAR_results/cifar_variance.png}
        \caption{\label{fig:cifarVariance}CIFAR10: Variance in inception scores for multiple runs.}
    \end{subfigure}
    \caption{\label{fig:CIFAR_results}Results for CIFAR10. }
\end{figure*}
We investigate grayscale image generation using public datasets MNIST \cite{726791} and FashionMNIST \cite{xiao2017fashion}, with the discriminator and generator architectures shown in Tables \ref{table:mnist_generator} and \ref{table:mnist_discriminator}, respectively. In Figure \ref{fig:mnist}, we present the MNIST dataset comparison. A careful inspection of the generated images reveals that the first-order optimizer Adam can produce high-fidelity images closely resembling the original dataset shown in Figure \ref{fig:mnist_real}. However, Adam has a drawback: it tends to generate images with characters of the same mode/class, such as 0 and 2, which might indicate mode-collapse. Additionally, Adam maintains a uniform style across all images in Figure \ref{fig:mnist}c. We highlighted the similar samples with white boxes to illustrate mode-collapse for Adam.
The second-order optimizer ACGD offers an alternative solution to this issue, as shown in Figure \ref{fig:mnist_ACGD}. Although the images contain some minor artifacts, they display high quality. ACGD mitigates the consistent character style observed in first-order optimizers. This enables increased diversity in generated images while preserving visual fidelity to the original dataset.
Our proposed method balances the advantages of both Adam and ACGD. By utilizing this approach, we can generate characters in \ref{fig:mnist_b} that closely mimic the original dataset and remain computationally efficient, similar to Adam, a first-order optimizer. Although some minor artifacts may be present in the generated images, our method overall provides a promising solution achieving high-quality and diverse results.
As depicted in Figure \ref{fig:fmnist}, the generation of FashionMNIST images displays a trend similar to that observed in MNIST. For grey-scale images, ConOpt's performance is found to be suboptimal. We note that the ConOpt paper \cite{mescheder2017} did not present results for grey-scale images; thus, we acknowledge that we may not know the appropriate hyperparameters for these datasets.


\begin{figure*}[t!]
    \centering
      \begin{subfigure}[t]{0.18\textwidth}
        \centering
        \includegraphics[height=0.9in]{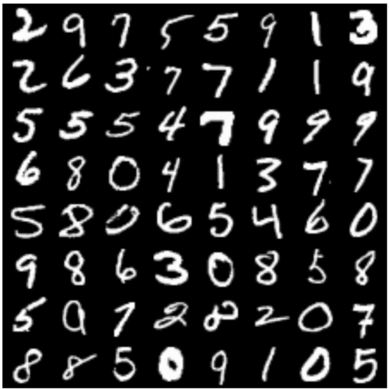}
        \caption{\label{fig:mnist_real}Real}
    \end{subfigure}
    ~
    \begin{subfigure}[t]{0.18\textwidth}
        \centering
        \includegraphics[height=0.9in]{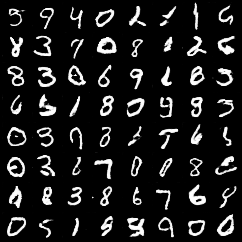}
        \caption{\label{fig:mnist_b}Ours}
    \end{subfigure}%
    ~ 
    \begin{subfigure}[t]{0.18\textwidth}
        \centering
        \includegraphics[height=0.9in]{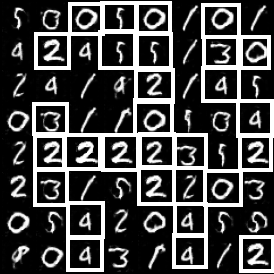}
        \caption{\label{fig:mnist_ADAM}Adam}
    \end{subfigure}
    ~ 
    \begin{subfigure}[t]{0.18\textwidth}
        \centering
        \includegraphics[height=0.9in]{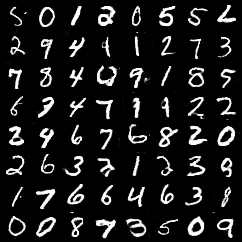}
        \caption{\label{fig:mnist_ACGD}ACGD}
    \end{subfigure}
    ~ 
    \begin{subfigure}[t]{0.18\textwidth}
        \centering
        \includegraphics[height=0.9in]{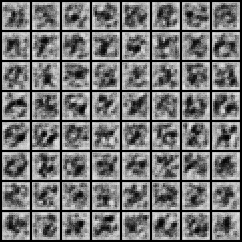}
        \caption{\label{fig:mnist_CONOPT}ConOpt}
    \end{subfigure}
    \caption{\label{fig:mnist}Images generated for MNIST. Samples inside white box show mode-collapse.}
\end{figure*}

\begin{figure*}[t!]
    \centering
      \begin{subfigure}[t]{0.18\textwidth}
        \centering
        \includegraphics[height=0.9in]{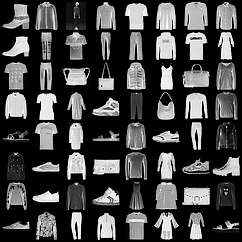}
        \caption{\label{fig:fmnist_REAL}Real}
    \end{subfigure}
    ~
    \begin{subfigure}[t]{0.18\textwidth}
        \centering
        \includegraphics[height=0.9in]{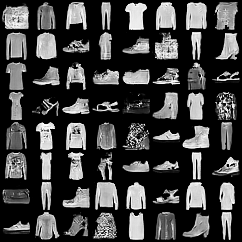}
        \caption{\label{fig:fmnist_Ours}Ours}
    \end{subfigure}%
    ~ 
    \begin{subfigure}[t]{0.18\textwidth}
        \centering
        \includegraphics[height=0.9in]{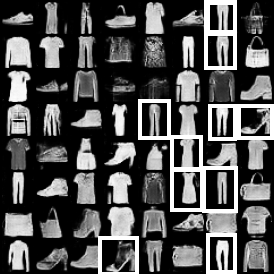}
        \caption{\label{fig:fmnist_ADAM}Adam}
    \end{subfigure}
    ~ 
    \begin{subfigure}[t]{0.18\textwidth}
        \centering
        \includegraphics[height=0.9in]{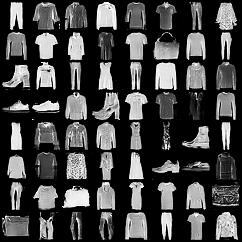}
        \caption{\label{fig:fmnist_ACGD}ACGD}
    \end{subfigure}
    ~ 
    \begin{subfigure}[t]{0.18\textwidth}
        \centering
        \includegraphics[height=0.9in]{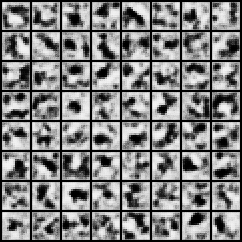}
        \caption{\label{fig:fmnist_CONOPT}ConOpt}
    \end{subfigure}
    \caption{\label{fig:fmnist}Images generated for Fashion MNIST. Samples inside white box show mode-collapse. }
\end{figure*}

\subsection{Image generation on color images}
For color image datasets, we examine the well-known and widely-used datasets {\tt CIFAR10} \cite{cifar10}, {\tt FFHQ} \cite{karras2019style}, and {\tt LSUN} bridges, tower, and classroom datasets \cite{yu2015lsun}. A comparison for the {\tt CIFAR10} dataset \cite{cifar10} is displayed in Figure \ref{fig:CIFAR_results}. Our method generates images of comparable quality to those produced by ACGD. A widely accepted metric for comparing methods on the {\tt CIFAR10} dataset is the inception score\cite{NIPS2016_8a3363ab}, which evaluates the quality and diversity of generated images using a pre-trained Inception v3 model. We analyze the inception scores of ACGD, Adam, ConOpt, and our method in Figure \ref{fig:cifarInception} and Figure \ref{fig:cifarVariance}. The former demonstrates the performance over extended training durations, while the latter illustrates the performance variations across multiple runs. The generator and discriminator architectures for \texttt{CIFAR10} are presented in Tables \ref{table:cifar_generator} and \ref{table:cifar_discriminator}, respectively. Table \ref{tab:time_per_epoch} displays the average time taken per epoch, while Table \ref{tab:inception} showcases the maximum inception scores across multiple runs. Our method achieves the highest inception score of 5.82, followed closely by Adam with a score of 5.76, and then ConOpt and ACGD.

ACGD has a long runtime due to the costs involved in solving linear systems. However, it may achieve a higher inception score if allowed to run for an extended period. To investigate this, we conducted experiments for a long runtime of approximately eight days, as illustrated in Figure \ref{fig:cifarInception}.
We used the same experimental setting as ACGD, including testing different values of GP and a case where GP was not used. For our architecture, ACGD gave better results than the results reported on a smaller architecture in ACGD paper; as shown in \cite[Figure~9]{ICGD}. However, we find that Adam demonstrated better performance than ACGD on this architecture.


Figure \ref{fig:FFHQ_generated_images} displays our generated images of FFHQ, created using a training dataset of FFHQ thumbnails downscaled to 64x64x3 dimensions. Our images demonstrate a high level of detail and texture in the skin and realistic facial features. Figure \ref{fig:Classroom_generated_images} shows our results on the LSUN classroom dataset, where our model successfully learned significant features, capturing the essence of classroom scenes. In Figure \ref{fig:tower_and_bridges}, the generated images of LSUN-tower and LSUN-bridges exhibit recognizable characteristics, such as the distinct shape of a tower and the features typical of bridges.


\begin{figure}[t]
\centering
\begin{subfigure}[t]{0.5\textwidth}
\centering
    \includegraphics[height=1.5in]{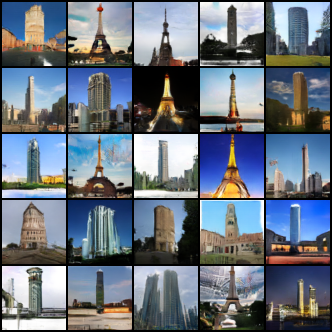}
\caption{\label{fig:tower_generated_images}LSUN-tower.}
\end{subfigure}%

\begin{subfigure}[t]{0.5\textwidth}
\centering
    \includegraphics[height=1.5in]{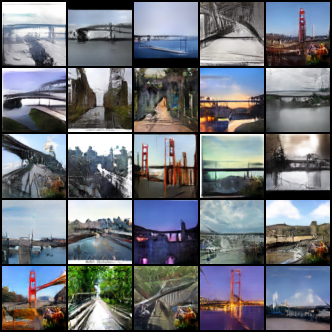}
\caption{\label{fig:bridges_generated_images}LSUN-bridges.}
\end{subfigure}%
\caption{\label{fig:tower_and_bridges}Images generated by our method on LSUN-tower and LSUN-bridges.}
\end{figure}
\begin{figure}[h]
\centering

\begin{subfigure}[t]{0.5\textwidth}
\centering
    \includegraphics[height=1.5in]{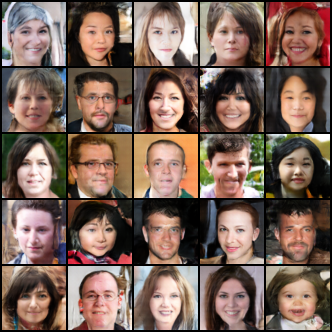}
\caption{\label{fig:FFHQ_generated_images}FFHQ.}
\end{subfigure}%

\begin{subfigure}[t]{0.5\textwidth}
\centering
    \includegraphics[height=1.5in]{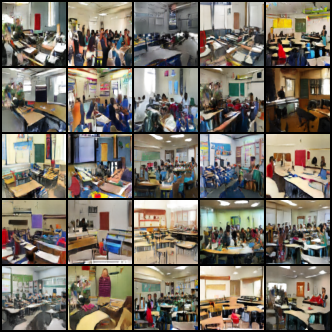}
\caption{\label{fig:Classroom_generated_images} Classroom.}
\end{subfigure}%
\caption{\label{fig:FFHQ_and_Classroom}Images generated by our method on FFHQ and LSUN-classroom.}
\end{figure}

\begin{table}[t]
\begin{center}
\caption{\label{table:Timing_and_inception_scores} Timings  and inception scores across multiple runs.}
    \begin{subtable}[b]{0.48\textwidth}
    \centering
    \begin{tabular}{|l|c|}
      \hline
      Method & Average time per epoch \\
      \hline
      ACGD & 35.34 \\
      Adam & 0.186 \\
      ConOpt & 0.25\\
      Ours & 0.184 \\
      \hline
    \end{tabular}
    \caption{Average time in seconds. }
    \label{tab:time_per_epoch}
  \end{subtable}
  \begin{subtable}[b]{0.48\textwidth}
  \centering
    \begin{tabular}{|l|l|l|l|}
      \hline
      Method & Run1 & Run2 & Run3 \\
      \hline
      ACGD & 5.55 & 5.52 & 5.60 \\
      Adam & 5.38 & 5.65 & 5.58 \\
      ConOpt & 5.73 & 5.72 & 5.76 \\
      Ours & \textbf{5.82} & \textbf{5.77}& \textbf{5.79}  \\
      \hline
    \end{tabular}
    \caption{Max inception scores across runs.}
    \label{tab:inception}
  \end{subtable}
  \end{center}
\end{table}



\subsection{Computational complexity and timing comparisons }
Assuming $x \in \mathbb{R}^m, y \in \mathbb{R}^n,$ then due to the terms $\nabla_{xy}^2$, and $\nabla_{yx}^2$ involved in ConOpt, there is additional computational cost of the order $O(m^2n^2)$, $O(m^3)$ and $O(n^3)$, these costs are associated with constructing these terms and for matrix-vector operation required. On the other hand, for ACGD, there are additional cost of order $O(m^3n^3)$ for solving the linear system with matrix of order $mn \times mn$ if direct method is used, and of order $O(mn)$ if iterative methods such as CG as in \cite{CGDpaper} is used. For empirically verifying the time complexity, for {\tt CIFAR10} dataset, in Figure \ref{fig:cifarTime}, we observe that ACGD is the slowest, and ConOpt is also costlier than our method. Our method has similar complexity to that of Adam the first order methods. To compute the order of complexity of Algorithm \ref{ALG1}, we have $m+n$ cost for scaling, $2(m+n)$ cost for $u(u^Tg_t)$ computation, and $m+n$ cost for $u^T u$ computation, then subtraction with $g_t$ costs $(m+n)$. The total cost per epoch is $O(m+n),$ which is same order as any first-order method such as Adam.



Table \ref{fig:cifarTime} displays a comparison of the time taken per epoch for various methods, including ACGD. It is noteworthy that ACGD consumes a significantly greater amount of time compared to other methods. Specifically, ACGD takes 35 seconds per epoch, which is approximately 200 times more than our method that takes only 0.184 seconds per epoch. Notably, this is almost identical to the time taken by Adam, which is 0.186 seconds per epoch.

\section{Code Repository}
The code for the experiments is available in an anonymous GitHub repository, accessible at:  \url{https://github.com/NeelMishra/Gauss-Newton-Based-Minimax-Solver}.


\section{Conclusion\label{sec:conclusion}}

We have introduced a novel first-order method for solving the min-max problems associated with generative adversarial networks (GANs) using a Gauss-Newton preconditioning technique. Our proposed method achieves the highest inception score among all state-of-the-art methods compared for CIFAR10, while maintaining a competitive time per epoch. We evaluate our method on popular grayscale and color benchmark datasets and demonstrate its superior performance. Furthermore, we prove that the fixed-point iteration corresponding to our proposed method is convergent. As a future work, one may look at more powerful architectures for generators and discriminators that will likely increase the quality of images generated.   

\section{Acknowledgment} This work was completed and funded at IIIT-Hyderabad, India under a Microsoft Academic Partnership (MAPG) Grant.

\bibliographystyle{plain}
\bibliography{mybib}
\end{document}